\documentclass{article}
\pdfpagewidth=8.5in
\pdfpageheight=11in
\usepackage{ijcai18}

\usepackage{times}
\usepackage{xcolor}
\usepackage{soul}
\usepackage[utf8]{inputenc}
\usepackage[small]{caption}

\usepackage{latexsym}
\usepackage{amssymb}
\usepackage{amsmath}
\usepackage{amsthm}
\usepackage[linesnumbered,ruled]{algorithm2e}
\usepackage{multirow}
\usepackage{tikz,pgffor}

\usepackage{pgfplots}
\usepackage{tikz}
\pgfplotsset{compat=newest}

\newcommand{\floor}[1]{{\left\lfloor #1 \right\rfloor}}
\newcommand{\ceil}[1]{{\left\lceil #1 \right\rceil}}

\newcommand{\Nset}{\mathbb{N}}

\newcommand{\Exp}{\mathbb{E}}

\newcommand{\game}{\mathcal{G}}

\newcommand{\val}{\mathit{Lev}}

\newcommand{\histories}{\mathcal{H}}

\newcommand{\pn}[1]{[#1]}

\newcommand{\Damage}{\mathit{Damage}}

\newcommand{\e}[1]{\langle #1 \rangle}

\newcommand{\dev}{\mathit{Dev}}

\newtheorem{theorem}{Theorem}

\newtheorem{definition}{Definition}
\newtheorem{example}{Example}

\renewenvironment{itemize}
{\begin{list}{$\bullet$}{
			\leftmargin=4mm
			\labelwidth=2.5mm
			\labelsep=2mm
			\itemsep=0mm
	}}{\end{list}}

\title{Synthesizing Efficient Solutions for Patrolling Problems\\ in the Internet Environment}

\author{
Tom\'{a}\v{s} Br\'{a}zdil, 
Anton\'{\i}n Ku\v{c}era, 
Vojt\v{e}ch \v{R}eh\'{a}k\thanks{Supported by grant No.~P202/12/G061, Czech Science Foundation. 
	Access to computing and storage facilities owned by parties and projects contributing to the National Grid Infrastructure MetaCentrum provided under the programme "Projects of Large Research, Development, and Innovations Infrastructures" (CESNET LM2015042), is greatly appreciated.}
\\ 
Faculty of Informatics, Masaryk University\\
Brno, Czech Republic\\
\{brazdil,kucera,rehak\}@fi.muni.cz
}

\pagestyle{plain}
\begin{document}

\maketitle

\begin{abstract}	
We propose an algorithm for constructing efficient patrolling strategies in the Internet environment, where the protected targets are nodes connected to the network and the patrollers are software agents capable of detecting/preventing undesirable activities on the nodes. The algorithm is based on a novel compositional principle designed for a special class of strategies, and it can quickly construct (sub)optimal solutions even if the number of targets reaches hundreds of millions.
\end{abstract}

\section{Introduction}
\label{sec-intro}

A \emph{security game} is a non-cooperative game where the Defender (leader) commits to some strategy and the Attacker (follower) first observes this strategy and then selects a best response. 
In \emph{adversarial} security games, it is assumed that the Attacker not only knows the Defender's strategy, but can also observe the current positions and moves of the patrollers. This worst-case assumption is adequate also in situations when the actual Attacker's abilities are \emph{unknown} and robust defending strategies are required.

In this paper, we concentrate on adversarial patrolling in the \emph{Internet environment}, where the protected targets are fully connected by a network, and the patrollers are software agents freely moving among the targets trying to discover/prevent dangerous ongoing activities. 
We start by presenting two concrete scenarios\footnote{These scenarios should be seen just as \emph{examples} of possible application areas for our results, not as an exhaustive list.}  illustrating the considered class of security problems.
\smallskip

\noindent
\textbf{1.~Large-scale surveillance systems.}
Contemporary surveillance systems may comprise thousands (or even millions\footnote{According to IHS, there were 245 million professionally installed video surveillance cameras active and operational globally in 2014.}) of cameras watching complex scenes where real-time detection and alert are crucial. For example, crime detection systems assume fast response in case of crime or suspect detected. Also, they typically need sophisticated and computation intensive analytics (see, e.g.,~\cite{Cotton:Video-Analytics}) such as object detection, database retrieval of image data, etc. 
Simple object detection tools running on high-end GPUs are capable of processing hundreds full scale images per second \cite{RDGF:Object-Detection}. More detailed analysis, such as search of a detected face in a large database of suspects, causes even smaller number of processed images per second. This means that the delay caused by the analytics  may prevent \emph{simultaneous} real-time analysis of \emph{all} video streams in a large-scale surveillance system, and the system must intelligently switch among the streams in real-time.
Since the image processing time is substantially shorter than the intrusion time, there is a chance of achieving a good level of protection even if the system runs only a limited number of analytical processes concurrently. The crucial question is how to schedule the ``visits'' of these processes (patrollers) to the individual cameras (targets) so that the chance of discovering an ongoing intrusion is maximized. Since the analytics is not perfect, a patroller detects an ongoing intrusion in a currently visited target only with certain probability. Hence, the chance of successful intrusion detection increases if the target under attack is visited repeatedly before completing the attack.  
\smallskip

\noindent
\textbf{2.~Remote software protection.} 
Protecting software from man-at-the-end (MATE) tampering is a hard problem in general. In client-server systems, where the server part is considered trusted, \emph{continuous software updates} of the client software have been proposed as a promising technique for achieving the protection \cite{CT:Codebender-IEEESoftware,CMMN:distributed-tamper-detection}. Since completing a MATE attack requires a substantial amount of time and effort, the idea is to update some crucial components of the client software regularly (by the trusted server) so that the malicious ongoing analysis becomes useless and it must be restarted. If these updates are performed frequently enough, a MATE attack cannot be completed. However, the server's capacity is limited and the number of clients is typically very large (especially when the clients are split into independent submodules to decrease the update overhead). The question is how to design a suitable update policy for the server achieving a good protection against a MATE actively seeking for a weakly protected client. A detailed discussion of all relevant aspects can be found in \cite{BLM:Remote-SW-Protection} where a \emph{security game model} of the problem is designed. The patrollers are the update processes managed by the server, and the targets are the client software modules. Each target is assigned the time needed to complete a MATE attack and another natural number specifying its \emph{importance}. It is assumed that performing an update takes a constant time. Although the patrollers cannot detect an ongoing MATE attack at the currently updated target, it is assumed that a possible ongoing attack is always interrupted by the update. Furthermore, it is shown how to compute a \emph{positional} Defender's strategy for updating the targets, where the Defender's decisions depend only on the current positions (i.e., the tuple of currently visited modules) of the patrollers. In \cite{BLM:Remote-SW-Protection}, it is explicitly mentioned\footnote{A rigorous proof revealing insufficiency of positional strategies can be found in \cite{KL:patrol-regular}.} that positional strategies are \emph{weaker} than general \emph{history-dependent} strategies taking into account the whole history of previous updates. Hence, positional strategies are generally not optimal, and the computational framework of \cite{BLM:Remote-SW-Protection} does not allow for efficient construction of history-dependent strategies. This limitation is overcome in the presented work (see below). 

\smallskip

\textbf{Our contribution.} 
Before explaining our results, let us briefly summarize the key \emph{assumptions} about the considered class of problems which are reflected in the adopted game model (see Section~\ref{sec-defs}):
\smallskip

\noindent
\textbf{1.} \emph{The environment is fully connected}. This is the basic property of Internet underpinning our (novel) approach to strategy synthesis based on game decomposition. The use of (non)linear programming is completely avoided.
\smallskip
   
\noindent   
\textbf{2.} \emph{The patrollers are centrally coordinated.} The patrollers are software processes fully controlled by a server.
\smallskip
   
\noindent
\textbf{3.} \emph{The probability $p$ of recognizing an ongoing intrusion at a currently visited target is not necessarily equal to~$1$.} That is, the intrusion detection is not fully accurate in general, as in Scenario~1. Note that is Scenario~2, an intrusion (MATE attack) is recognized (interrupted) with probability~$1$.
\smallskip
   
\noindent   
\textbf{4.} \emph{The time needed to complete an intrusion may depend on a concrete target, and the individual targets may have different importance.} 
\smallskip

\noindent
\textbf{5.} \emph{The time needed to complete a patroller's activity is almost constant\footnote{Technically, this means that time is random but strongly concentrated around its expected value. Note that in Scenario~1, the real-time detection consists of two phases where a given image is first quickly classified as either harmless or potentially dangerous (in almost constant time), and dangerous images are subsequently enqueued for a more advanced analysis, such as face recognition in a database of suspects. This queue is processed separately (possibly using special hardware), and hence the second phase does not influence the assignment of patrolling processes.}.} This is satisfied in both scenarios.
\smallskip

\noindent
\textbf{6.} \emph{The number of targets is very large}. This influences mainly the methodology of algorithmic solutions. Algorithms for solving security games are mostly based on mathematical programming (see \emph{Related work}). This approach becomes \emph{infeasible} in our setting where the number of targets can easily reach a billion.
\smallskip

We adopt the adversarial setting, i.e., assume that the Attacker knows the Defender's strategy (the server program assigning the patrolling processes to targets) and can also determine the targets currently visited (e.g., by analyzing the network traffic or deploying malware into the server). 
Since there are no principal bounds on the duration of the patrolling task, our games are of infinite horizon. The adopted solution concept is \emph{Stackelberg equilibrium} (see, e.g, \cite{yin2010stackelberg}) where the Defender/Attacker corresponds to the leader/follower.

\smallskip

\noindent
Our \textbf{main results} can be summarized as follows:
\smallskip

\noindent
\textbf{A.} We give an \emph{upper bound} on the level of protection achievable for a given game structure and a given number of patrollers. Consequently, we can also derive a \emph{lower bound} on the number of patrollers needed to achieve a given level of protection. These bounds are valid for \emph{general} (i.e., history-dependent and randomized) strategies.  They are generally not tight, but good enough to serve as a ``yardstick'' for measuring the quality of the constructed Defender's strategies.
\smallskip

\noindent
\textbf{B.} We develop a novel \emph{compositional} approach to constructing Defender's strategies in Internet patrolling games. The method is based on splitting a given game into disjoint subgames, solving them recursively, and then combining the obtained solutions into a strategy for the original game. 
We evaluate the quality of the constructed strategies against the bounds described in~A., and show that our algorithm produces  \emph{provably} (sub)optimal solutions. A precise formulation is given in Section~\ref{sec-experiments}.
\smallskip

The running time of our algorithm is low. Instances with millions of targets are processed in  units of seconds (see Section~\ref{sec-experiments}). The only potentially costly part is solving a certain system of polynomial equations constructed by the algorithm, but this was always achieved in less than a second for all instances we analyzed (using Maple). The constructed strategies are randomized and history-dependent. We call them \emph{modular} because they use a bounded counter to count the units of elapsed time modulo certain constant. Hence, modular strategies are still easy to implement.

One may also ask whether some ``naive'' strategy synthesis method can produce strategies of comparable quality (i.e., whether our decomposition method is really worth the invested effort). Perhaps, the most straightforward way of constructing \emph{some} reasonable Defender's strategy is to compute a positional strategy where the probability of selecting a given vertex depends only on its importance and attack length, and it is chosen so that all vertices are protected equally well. We demonstrate that the strategies computed by our algorithm are \emph{substantially} better than these naively constructed ones.

\smallskip

\noindent
\textbf{Related work.} Most of the existing works about security games study either the problem of computing an optimal static allocation of available resources to the targets, or the problem of computing an optimal movement strategy for a mobile Defender. Security games with static allocation have been studied in, e.g., \cite{JKKOT:massive-security-games,JKKOT:optimal-resource-massive-security-games,PJMOPTWPK:Deployed-ARMOR,TRKOT:IRIS}. In patrolling games, the focus was primarily on finding locally optimal strategies for robotic patrolling units either on restricted graphs such as circles \cite{AKK:multi-robot-perimeter-adversarial,AgmonKK08-2}, or arbitrary graphs with weighted preference on the targets \cite{Basilico2009,Basilico2009-2}. Alternatively, the work focused on some novel aspects of the problem, such as variants with moving targets \cite{bosansky2011aamas,Fang2013}, multiple patrolling units \cite{Basilico2010}, or movement of the Attacker on the graph \cite{Basilico2009-2} and reaction to alarms \cite{MunozdeCote2013,BNG:patrolling-alarm}. Most of the existing literature assumes that the Defender is following a positional strategy that depends solely on the current position of the Defender in the graph and they seek for a solution using mathematical programming \cite{BGA:large-patrol-AI}. Few exceptions include duplicating each node of the graph to distinguish internal states of the Defender (e.g., in \cite{AKK:multi-robot-perimeter-adversarial} authors consider a direction of the patrolling robot as a specific state; in \cite{bosansky2012-aaaiss}, this concept is further generalized), or seeking for higher-order strategies in \cite{Basilico2009}. 
In \cite{ABRBKK:patrolling-eps-optimal}, an algorithm for computing an $\varepsilon$-optimal strategy for the Defender is designed, but this algorithm is of exponential complexity. The existing works on multi-agent patrolling mostly assume autonomous patrollers without a central supervision (see, e.g., \cite{ARSTMCC:Multi-agent-patrol}).

A patrolling game model for remote software protection has been proposed in \cite{BLM:Remote-SW-Protection}. The model is similar to ours, but the probability of detecting (interrupting) an ongoing intrusion is set to~$1$, which simplifies the strategy synthesis (repeated visits to the same target during an ongoing intrusion do not increase its protection). The constructed Defender's strategies are positional and computed by standard methods based on mathematical programming, which limits their scalability and does not lead to optimal solutions.

Continuous-time adversarial patrolling games with one patroller in fully connected environment have recently been studied in \cite{KST:Quasi-regular-sequences-SODA}. Based on the Defender's strategy, the Attacker selects which vertex to attack and for how long. The Attacker expected utility grows linearly with the time spent in the vertex but drops to 0 if being caught. The Defender's goal is to minimize the
Attacker's utility. This setup is technically different from ours, although it is also motivated by possible applications in Internet security problems (in \cite{KST:Quasi-regular-sequences-SODA}, no concrete scenarios illustrating the applicability of the considered model are given).

\section{The Game Model}
\label{sec-defs}

In this section we present our game-theoretic model of adversarial patrolling in the Internet environment  reflecting Assumptions~1.-6.{} formulated in Section~\ref{sec-intro}. 

\textbf{Preliminaries.} We use $\Nset_0$ and $\Nset$ to denote the sets of non-negative and positive integers, respectively. The set of all probability distributions over a finite set $M$ is denoted by $\Delta(M)$. The lower and upper integer approximations of a real number $a$ are written as $\lfloor a \rfloor$ and $\lceil a \rceil$, respectively. 
The number of elements of a set $A$ is denoted by $|A|$. A $k$-subset of $A$ is a subset of $A$ with precisely $k$~elements, and we use $A^{\e{k}}$ to denote the set of all $k$-subsets of~$A$. For $f : A \rightarrow B$ and $X \subseteq A$, we use $f|_{X}$ to denote the restriction of $f$ to $X$.

\textbf{Game structures.}
A \emph{game structure} is a tuple $\game = (V,d,\alpha,p)$, where $V$ is a finite set of \emph{vertices} (targets), \mbox{$d : V \rightarrow \Nset$} specifies the number of time units needed to complete an intrusion at a given vertex, \mbox{$\alpha : V \rightarrow \Nset$} is a \emph{cost} function specifying the importance of each vertex (a higher number means higher importance), and $p \in (0,1]$ is the probability of discovering an ongoing intrusion by a patroller visiting a vertex under attack. We assume that a patroller spends one unit of time when moving from vertex to vertex, which corresponds to performing the patroller's activity at the previously visited vertex.

\textbf{Defeder's strategy.} Assume $\game$ is protected by $k \in \Nset$ patrollers (where $k \leq |V|$) centrally coordinated by the Defender who has a complete knowledge about the history of previously visited vertices. Based on the history, the Defender selects a $k$-subset of vertices where the patrollers are sent in the next round, and this decision can be randomized.

Formally, a \emph{Defender's strategy} is a function $\eta$ assigning to every history  $U_1,\ldots,U_\ell$, where $\ell \geq 0$ and  $U_i$ is the \mbox{$k$-subset} of vertices visited in round~$i$, a probability distribution over~$V^{\e{k}}$. Note that a $k$-subset of vertices visited in the first round is determined by $\eta(\varepsilon)$, where $\varepsilon$ is the empty history.  A strategy  $\eta$ is \emph{positional} if $\eta(U_1,\ldots,U_\ell)$ depends only on~$U_\ell$.

Each strategy $\eta$ determines a unique probability space over all \emph{walks}, i.e., infinite sequences $U_1,U_2,\ldots$ where $U_i \in V^{\e{k}}$ for all $i \in \Nset$, in the standard way.

\textbf{Attacker's strategy.} Depending on the observed history of visited vertices, the Attacker may choose to attack some vertex or wait. Formally, an \emph{Attacker's strategy} is a function \mbox{$\pi$} assigning to every history an element of $V \cup \{\bot\}$ such that whenever $\pi(U_1,\ldots,U_\ell) \neq {\bot}$, then for all $j < \ell$ we have that $\pi(U_1,\ldots,U_j)={\bot}$, i.e., the Attacker may attack at most once along a walk.

\textbf{Level of protection.} The aim of the Attacker is to maximize the \emph{expected damage}, i.e., the expected cost of a successfully attacked target, and the Defender aims at the opposite. Let us fix an Attacker's strategy $\pi$, and let $w = U_1,U_2,\ldots$ be a walk. The damage achieved by $\pi$ in $w$, denoted by $\Damage^\pi(w)$, is defined as follows:
\begin{itemize}
   \item If the Attacker does not attack along $w$, i.e., $\pi(U_1,\ldots,U_\ell) = {\bot}$ for all $\ell \in \Nset_0$, then $\Damage^\pi(w) {=} 0$.
   \item Otherwise, there is $\ell \in \Nset_0$ such that $\pi(U_1,\ldots,U_\ell) = v$, where $v \in V$ is the attacked vertex. 
   For a given $i \in \{1,\ldots,d(v)\}$, we say that $v$ is \emph{visited in round~$i$} (since the moment of initiating the attack) if $v \in U_{\ell+i}$. For each such $i$, the probability of discovering the ongoing attack is~$p$. The attack is successful if it remains undiscovered after \emph{all} visits to $v$ in the next $d(v)$ rounds (the individual trials are considered independent). That is, the probability of performing the attack successfully is equal to $(1-p)^c$, where $c$ is the total number of all $i \in \{1,\ldots,d(v)\}$ such that $v$ is visited in round~$i$. We put 
   \[
   \Damage^\pi(w) \ = \  (1-p)^c \cdot \alpha(v) 
   \]
   to reflect the importance of the attacked vertex $v$ (if $p = 1$ and $c = 0$, we put
   $\Damage^\pi(w) =  \alpha(v)$). 
\end{itemize}
Let $\eta$ be a Defender's strategy for $k$~patrollers and $\pi$ an Attacker's strategy. The \emph{expected damage} caused by $\pi$ when the Defender commits to $\eta$, denoted by $\Exp^\eta[\Damage^\pi]$, is the expected value of $\Damage^\pi$ in the probability space over the walks determined by $\eta$ (see above). Note that $\Exp^\eta[\Damage^\pi] \leq \alpha_{\max}$, where $\alpha_{\max}$ is the maximal cost assigned to a vertex of $\game$.

In Section~\ref{sec-intro}, we used the term ``protection'' instead of ``damage'', and said that the Defender aims at maximizing the protection rather than minimizing the damage. This original terminology seems more intuitive, and it will be used also in the rest of this paper. Formally, the \emph{level of protection} achieved by $\eta$ against $\pi$ is defined as
\[
   \val(\eta,\pi) \ = \ \alpha_{\max} - \Exp^\eta[\Damage^\pi] \,.
\]
Note that maximizing the level of protection is \emph{equivalent} to minimizing the expected damage.

Furthermore, the level of protection achieved by $\eta$ (against \emph{any}~$\pi$) is defined by  
\(
\val(\eta) = \inf_{\pi} \val(\eta,\pi)\,.
\)
Finally, the maximal level of protection achievable with $k$~patrollers is defined as 
\mbox{\(
\val_k \ = \ \sup_{\eta} \, \val(\eta)
\)}, 
where $\eta$ ranges over all Defender's strategies for $k$~patrollers. The underlying $\game$ will always be clearly determined by the context.
A Defender's strategy $\eta$ for $k$~patrollers is \emph{$\delta$-optimal}, where $\delta \geq 0$, if $\val(\eta) \geq \val_k - \delta$. A \mbox{$0$-optimal} strategy is called \emph{optimal}.

\section{The Bounds}

In this section we give an upper bound on $\val_k$ and a lower bound on the number of patrollers needed to achieve a given level of protection. These bounds are not always tight, but good enough for evaluating the efficiency of Defender's strategies computed by the algorithm of Section~\ref{sec-alg}. 

The bounds are obtained by solving a non-trivial system of exponential equations constructed for a given game structure. In general, the solution can only be computed by numerical methods, which is fully sufficient for our purposes. Contemporary mathematical software such as Maple can perform the required computations very efficiently even if for high parameter values.

Let $\game = (V,d,\alpha,p)$ be a game structure, and $\alpha_{\max}$ the maximal cost assigned to a vertex of $\game$. Furthermore, let $\varrho$ be a fresh variable. For every $v \in V$, we construct the equation
\[
  \varrho = \alpha_{\max} - \alpha(v)\cdot (1-p)^\floor{Q_v} \cdot (1-p\cdot (Q_v - \floor{Q_v}))
\]
where $Q_v$ is a fresh variable (if $p=1$, the equation is simplified into  $\varrho = \alpha_{\max} - \alpha(v) \cdot (1- Q_v)$). Let $\mathcal{L}_{\game}$ be the resulting system of equations. Note that $\mathcal{L}_{\game}$ has $|V|$ equations and $|V| + 1$ variables.

\begin{theorem}
	\label{thm-upper-new}
	Let $\game = (V,d,\alpha,p)$ be a game structure.
	\begin{itemize}	
	\item[a)] The level of protection achievable with a given number of patrollers~$k$ is bounded from above by $\varrho$ obtained by solving the system $\mathcal{L}_{\game}$ extended with the equation $k = \sum_{v\in V,Q(v)>0} Q_v/d(v)$.
	\item[b)] Let $\tau \geq 0$ be a desired level of protection. Consider the system of equations obtained by extending $\mathcal{L}_{\game}$ with the equation $\tau = \varrho$. If this system has no solution such that $Q_v \leq d(v)$ for all $v \in V$, then the level of protection $\tau$ is not achievable for an arbitrarily large number of patrollers (if $p =1$, the condition is restricted to $Q_v \leq 1$).
	Otherwise, the number of patrollers needed to achieve the level of protection $\tau$ is bounded from below by $\ceil{\sum_{v\in V,Q(v)>0} Q_v/d(v)}$, where the value of each $Q_v$ is obtained by solving the system.
	\end{itemize}
\end{theorem}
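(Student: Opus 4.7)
The plan is to derive the upper bound in three stages: (i)~extract long-run visit frequencies from any Defender's strategy, (ii)~bound the Attacker's expected damage via a convex piecewise-linear envelope of $x\mapsto(1-p)^x$, and (iii)~show that the resulting minimax optimisation is solved exactly by $\mathcal{L}_\game$. For stage~(i), I would fix any $\eta$ with $k$~patrollers and define $f_v := \lim_{T\to\infty} \tfrac{1}{T}\sum_{t=1}^{T}\Prob_\eta[v\in U_t]$; existence is secured by passing to a Ces\`aro subsequence inside the compact simplex $\{f\in[0,1]^V : \sum_v f_v = k\}$ (the sum follows from $|U_t|=k$ almost surely). Setting $Q_v := f_v\cdot d(v)$ yields $\sum_v Q_v/d(v)=k$, matching the constraint of the theorem, together with $Q_v\leq d(v)$ because $f_v\leq 1$.

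For stage~(ii), let $\phi(x) := (1-p)^{\floor{x}}\bigl(1 - p(x-\floor{x})\bigr)$. I would verify that $\phi$ is the piecewise-linear interpolation of $x\mapsto(1-p)^x$ through the non-negative integers; in particular $\phi$ is continuous, strictly decreasing, and convex, because its slopes $-p(1-p)^n$ strictly increase in~$n$. Since $\phi$ agrees with $(1-p)^x$ at integers, Jensen's inequality yields $\Exp[(1-p)^C] = \Exp[\phi(C)]\geq \phi(\Exp[C])$ for every non-negative integer-valued $C$. Letting $C_v(t)$ count visits to $v$ in rounds $t+1,\ldots,t+d(v)$, a double-counting identity gives $\tfrac{1}{T}\sum_{t=0}^{T-1}\Exp[C_v(t)] = Q_v + O(d(v)/T)$, so for every $\varepsilon>0$ some attack time $t^*$ satisfies $\Exp[C_v(t^*)]\leq Q_v+\varepsilon$. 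The Attacker who commits to attacking~$v$ at~$t^*$ then collects expected damage at least $\alpha(v)\,\phi(Q_v+\varepsilon)$; maximising over~$v$ and sending $\varepsilon\to 0$ yields $\val(\eta) \leq \alpha_{\max} - \max_v \alpha(v)\,\phi(Q_v)$.

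For stage~(iii), I would minimise $\max_v \alpha(v)\phi(Q_v)$ subject to $Q_v\geq 0$ and $\sum_v Q_v/d(v) = k$. A standard equalisation / Lagrange argument shows that at the minimum, all values $\alpha(v)\phi(Q_v)$ with $Q_v>0$ share a common value $\alpha_{\max}-\varrho$, while every $v$ with $Q_v=0$ satisfies $\alpha(v)\leq\alpha_{\max}-\varrho$. This is precisely $\mathcal{L}_\game$ together with $k=\sum_{Q_v>0} Q_v/d(v)$, which proves~(a). Part~(b) follows by re-reading~(a): demanding $\val(\eta)\geq\tau$ forces the existence of a solution to $\mathcal{L}_\game$ with $\varrho=\tau$ and $Q_v\leq d(v)$; non-existence rules out~$\tau$, and existence gives $k\geq\sum_{Q_v>0} Q_v/d(v)$, from which the integrality of~$k$ supplies the stated ceiling.

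The main obstacle will be making the ergodic/averaging step rigorous for arbitrary history-dependent randomised strategies: securing the Ces\`aro limit $f_v$ via compactness and subsequence extraction, and then converting the ``on average'' bound $\Exp[C_v(t^*)]\leq Q_v+\varepsilon$ into a genuinely committed Attacker's strategy whose expected damage converges to $\alpha(v)\phi(Q_v)$ as $T\to\infty$; the equalisation step in stage~(iii) is otherwise a routine convexity / exchange argument, tractable despite the non-smoothness of~$\phi$ at the integer breakpoints.
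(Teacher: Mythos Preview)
Your plan is sound and reaches the correct bound; structurally it parallels the paper's argument (both bound the expected damage in a window of length $d(v)$ via the expected visit count $Q_v$ and the piecewise-linear envelope $\phi$, then equalise across vertices), but the execution differs in two places. First, the paper avoids your Ces\`aro/compactness machinery entirely by fixing the finite horizon $T=\prod_{v}d(v)$ and considering attacks $\pi_{v,j}$ only at times $j$ that are multiples of $d(v)$ with $0\le j<T$; these windows tile $[0,T)$ exactly, so the averaging identity $\sum_j Q_{v,j}=\sum_{i=1}^T P_i(v)$ and the constraint $\sum_v Q_v/d(v)=k$ drop out directly with no subsequence extraction---your ergodic step is correct but heavier than needed. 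Second, and working in your favour, for the key inequality $\Exp[(1-p)^{C}]\ge\phi(\Exp[C])$ you invoke Jensen on the convex interpolant $\phi$ (valid because $C$ is integer-valued and $\phi$ matches $(1-p)^x$ at the integers), whereas the paper writes $\Exp[(1-p)^{C}]=\prod_{i}\bigl(1-p\,P_{j+i}(v)\bigr)$ and then minimises the product over configurations with a fixed sum; that product identity presumes the visit indicators $\mathbf{1}[v\in U_{j+i}]$ are independent across rounds, which need not hold for a general history-dependent $\eta$, so your Jensen argument is the cleaner and more rigorous justification of this step.
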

\begin{proof}%
	The proof is based on a careful analysis of ``uniform coverage'' of all vertices with \mbox{$k$-patrollers} reflecting the importance of individual vertices. Intuitively, $Q_v$ stands for the expected number of visits to $v$ in subsequent $d(v)$ steps. 
	
	Let us fix a game structure $\game$ and a number of patrollers $k \in \Nset$. Let $\eta$ be an optimal strategy for $k$~patrollers (the existence of $\eta$ follows by standard arguments).
	Let $T=\prod_{v \in V} d(v)$. For all $1 \leq i \leq T$ and $v\in V$, let $P_i(v)$ be the probability of $v$ appearing in the $i$-th subset visited by a walk when the Defender commits to $\eta$. Note that for each $1 \leq i \leq T$, we have $\sum_{v\in V} P_i(v) = k$ as there are $k$ patrollers assigned in each round.
	Hence, the sum of all $P_i(v)$'s (for all $1 \leq i \leq T$ and $v \in V$) is equal to $T\cdot k$.
	
	For every $v \in V$ and every $0 \leq j < T$, where $j$ is a multiple of $d(v)$, let $\pi_{v,j}$ be the Attacker's strategy such that $\pi_{v,j}(w) = v$ for every history $w$ of length $j$. 
	A trivial calculation reveals  
	\begin{equation} 
	\val(\eta,\pi_{v,j}) \ = \ \alpha_{\max} - \alpha(v) \cdot \prod_{i=1}^{d(v)} \left(1 - p \cdot P_{j+i}(v) \right) 
	\label{eq-lev}
	\end{equation}
	Further, let $Q_{v,j} = \sum_{i=1}^{d(v)} P_{j+i}(v)$. By analyzing Equation~\eqref{eq-lev}, it follows the $\val(\eta,\pi_{v,j})$ is maximized if the values $P_{j+i}(v)$ satisfy 
	\begin{itemize}
	\item $P_{j+i}(v)=1$ for all $i=\{1,\dots, \floor{Q_{v,j}}\}$, 
	\item $P_{j+i}(v) = Q_{v,j} - \floor{Q_{v,j}}$ for $i=\floor{Q_{v,j}}+1$, and 
	\item $P_{j+i}(v) = 0$ for all $i > \floor{Q_{v,j}}+1$. 
	\end{itemize}
	The right-hand side of Equation~\eqref{eq-lev} can then be rewritten to \mbox{$\alpha_{\max} - \alpha(v) \cdot (1-p)^\floor{Q_{v,j}}\cdot(1-p\cdot(Q_{v,j}-\floor{Q_{v,j}}))$}. In the best case, when all $P_i(v)$ were distributed optimally by $\eta$, all $Q_{v,j}$ are the \emph{same} for a given $v \in V$, and the same level of protection $\varrho$ is achieved for every vertex~$v$. Thus, we use just $Q_{v}$ instead of $Q_{v,j}$ and 
	obtain the system $\mathcal{L}_\game$. 
	
	Further, the sum of all $P_i(v)$'s, for all $1 \leq i \leq T$ and $v \in V$, is equal to $T \cdot k$ (see above). By reordering the summands, the sum can be rewritten into $\sum_{v \in V} T/d(v)\cdot Q_v$. Thus, we obtain $k = \sum_{v \in V} Q_v/d(v)$.
	Observe that $0 \leq Q_v \leq d(v)$, and if $p = 1$, we have that $0 \leq Q_v \leq 1$, because then it does not make any sense to visit the same vertex more than once.
	
Let us note that when solving the systems considered in Theorem~\ref{thm-upper-new}, 
it may happen that some $Q_v$'s become negative. This happens if (and only if) the importance of some vertices is so low that even if the patrollers do not visit them at all, they are still protected better than the other vertices (this also explains why the sum $\sum_{v\in V,Q(v)>0} Q_v/d(v)$ is taken only over positive $Q(v)$'s). The system of Theorem~\ref{thm-upper-new}(b) may have no eligible solution in situations when $p$ is so small that the protection $\tau$ is not achievable even if each $v$ is visited with probability one in every step. 
If $p=1$, no eligible solution is induced only by $\tau > \alpha_{\max}$.
\end{proof}

\section{Modular strategies}
\label{sec-modular}
In this section, we introduce \emph{modular strategies} and the associated  \emph{compositional principle} which are the cornerstones of our strategy synthesis algorithm.

\begin{definition}
	Let $\game = (V,d,\alpha,p)$ be a game structure, and~$c_{\game}$ the least common multiple of all $d(v)$ where $v \in V$.
	A Defender's strategy $\eta$ for $\game$ is \emph{modular} if for every $h \in \histories$, the distribution $\eta(h)$ depends only on $\ell \mathrm{~mod~} c_{\game}$, where $\ell$ is the length of~$h$.
\end{definition}

\noindent
Hence, a modular strategy ``ignores'' the precise structure of a history and takes into account only  $\ell \mathrm{~mod~} c_{\game}$. At first glance, this looks like a severe restriction substantially limiting the efficiency of modular strategies. Surprisingly, this intuition turns out to be largely incorrect. As we shall see in Section~\ref{sec-experiments}, the level of protection achievable by modular strategies approaches and in some cases even \emph{matches} the upper bound of Theorem~\ref{thm-upper-new} which proves their \emph{(sub)optimality} among \emph{general} strategies.  Also note that modular strategies require only a bounded counter as auxiliary memory, which makes them easy to implement. 
To simplify our notation, we formally consider modular strategies as functions $\eta : \Nset_0 \rightarrow \Delta(V^{\e{k}})$, where $\eta(\ell)$ depends just on $\ell \mathrm{~mod~} c_{\game}$.

The main advantage of modular strategies is their \emph{compositionality}. A modular strategy for a given game structure $\game$ can be constructed by decomposing $\game$ into pairwise disjoint substructures $\game_1,\ldots,\game_m$, computing modular strategies for these substructures recursively, and then combining them into a modular strategy for~$\game$. The algorithm presented in Section~\ref{sec-alg} works in this way. Now we describe the composition step in greater detail.

Let $\game = (V,d,\alpha,p)$, and let $k$ be the number of patrollers protecting $\game$. Assume that we already managed to decompose $\game$ into smaller disjoint substructures $\game_1,\ldots,\game_m$, where $\game_i = (V_i,d|_{V_i}, \alpha|_{V_i},p)$, so that, for all $n \leq k$ and $i \leq m$, an efficient modular strategy $\eta_i\pn{n}$ for $\game_i$ and $n$~patrollers together with $\val(\eta_i\pn{n})$ can already be efficiently computed. In order to combine the constructed strategies into a strategy for $\game$, we need to solve the following problems: 
\begin{itemize}
	\item How to assign the $k$ available patrollers to $\game_1,\ldots,\game_m$ ?
	\item How to compose the modular strategies constructed for $\game_1,\ldots,\game_m$ into a modular strategy $\eta$ for $\game$?
\end{itemize}

\smallskip

\noindent
\textbf{Assigning patrollers to substructures.}
Clearly, the number of patrollers assigned to $\game_i$ should not exceed $|V_i|$. It is also clear that if $k$ is smaller than $m$, the assignment cannot be deterministic, because otherwise some substructures would remain completely unprotected, and the Attacker could attack them without any risk (recall the Attacker knows the Defender's strategy). So, we need to assign the $k$~patrollers to $\game_1,\ldots,\game_m$ \emph{randomly} in general. Formally, an \emph{assignment} for $k$~patrollers and $\game_1,\ldots,\game_m$ is a probability distribution $\beta$ over the set of all \emph{eligible} allocations $\vec{\kappa} \in \Nset_0^m$, where the components of $\vec{\kappa}$ sum up to~$k$ and $\vec{\kappa}_i \leq |V_i|$ for all $i\leq m$.

\smallskip 

\noindent
\textbf{Composing strategies constructed for substructures.} Assume we already constructed a suitable assignment $\beta$ for $k$ patrollers and $\game_1,\ldots,\game_m$. By our assumption, for all $n \leq k$ and $i \leq m$, an efficient modular strategy $\eta_i\pn{n}$ for $\game_i$ and $n$~patrollers can already by constructed. Our task is to construct a suitable modular strategy $\eta$ for $\game$ and $k$~patrollers. For every $\ell \in \Nset_0$, the outcome of $\eta(\ell)$ is determined as follows: 
\begin{itemize}
  \item First, some eligible allocation $\vec{\kappa} \in \Nset_0^m$ is selected randomly according to $\beta$ (independently of $\ell$). 
  \item Then, for each $i \in \{1,\dots,m\}$, we independently select a \mbox{$\vec{\kappa}_i$-subset} $U_i$ of $V_i$ according to $\eta_i\pn{\vec{\kappa}_i}(\ell)$. Thus, we obtain a \mbox{$k$-subset} \mbox{$U_1 \cup \cdots\cup U_m$} of $V$, which is the (random) outcome of $\eta(\ell)$.
\end{itemize}
Observe that the above definition makes a good sense because all $\eta_i\pn{\vec{\kappa}_i}$ are \emph{modular} strategies\footnote{If $\eta_i\pn{\vec{\kappa}_i}$ were general %
	 strategies, they could not be combined in such a simple way, because in each step, a different number of patrollers (including zero) can be assigned to a given $\game_i$, and hence the history produced by $\eta$ would not have to contain a valid history of $\eta_i\pn{\vec{\kappa}_i}$. So, it would not be clear how to simulate $\eta_i\pn{\vec{\kappa}_i}$.}. 
\smallskip

Hence, a concrete strategy synthesis algorithm based on the introduced decomposition principle (such as the one presented in Section~\ref{sec-alg}) must implement the following:
\begin{itemize}
	\item a \emph{decomposition procedure} which, for given $k$ and $\game$, either decomposes $\game$ into disjoint substructures to be solved recursively, or computes a suitable modular strategy for $\game$ and $k$ patrollers;
	\item an \emph{assignment procedure} which, for given $k$ and $\game_1,\ldots,\game_m$, computes an assignment for $k$ patrollers and $\game_1,\ldots,\game_m$.
\end{itemize}
These procedures may reflect different decomposition tactics apt for specific classes of instances.

\section{Examples} In Section~\ref{sec-alg}, we design a concrete decomposition algorithm based on the presented method and evaluate the quality of the constructed strategies. To get a preliminary intuition about its functionality, and to illustrate the notions introduced above, we present two simple examples.

\begin{example}\rm
	\label{exa-one}
	Let us consider $\game$ where $V = \{v_1,v_2,v_3\}$, \mbox{$d(v_i) = 2$} and $\alpha(v_i)=1$ for all $v_i$'s, and $p = 1$. We aim at protecting $\game$ with just one patroller. This appears like a trivial task---we have three equally important vertices with attack length~$2$, so a naive attempt is to choose the visited vertex uniformly at random in each step. The level of protection achieved in this way is equal to $1/3 + 2/3 \cdot 1/3 = 5/9$ (if a given vertex is attacked, it is visited in the first step with probability $1/3$, and if this does not happen (with probability $2/3$), there is another chance of $1/3$ in the second step). 
	
	If we try to ``improve'' the above strategy naively by selecting randomly only between the two vertices not visited in the previous step, the level of protection becomes \emph{worse}. If, say, $v_1$ is visited in step~$i$, then in step $i+1$ we choose just between $v_2$ and $v_3$, and in step $i+2$ we choose $v_1$ with probability $1/2$. So, the achieved level of protection is actually equal to $1/2$.
	
	So, can we do better than $5/9$? If so, what is the \emph{best} achievable level of protection? Given the simplicity of $\game$, the answers are perhaps surprising. 
	
	The decomposition algorithm of Section~\ref{sec-alg} starts by splitting $V$ into $V_1 = \{v_1,v_2\}$, $V_2 = \{v_3\}$, and constructing two trivial Defender's strategies $\eta_1$ and $\eta_2$ for one patroller in $V_1$ and $V_2$. The strategy $\eta_1$ selects $v_1$/$v_2$ with probability one at all even/odd steps, and $\eta_2$ always selects $v_3$. To combine $\eta_1$ and $\eta_2$, we need to fix an appropriate assignment $\beta$ for the two eligible vectors $(1,0),(0,1)$. That is, we need to determine $x \in [0,1]$ such that $\beta(1,0) = x$ and $\beta(0,1) = 1-x$. Now what is an appropriate value for $x$? The resulting strategy $\eta$ protects the vertices $v_1,v_2$ with probability $x$, but the vertex $v_3$ is protected with probability $1-x^2$ (in order to miss $v_3$, we need to miss it twice, each time with probability $1- (1-x) = x$).  Ideally, $\eta$ should protect all vertices equally well, which is satisfied when $x = 1-x^2$. This yields $x=(\sqrt{5}-1)/2 \approx 0.618 > 5/9$. Thus, the composed strategy $\eta$ achieves the level of protection equal to $(\sqrt{5}-1)/2$, and one can show that this is \emph{optimal}\footnote{The argument is non-trivial and specific to this instance.}. This also shows that the best achievable level of protection can be \emph{irrational}, even for very simple game structures.\qed    
\end{example}

\begin{example}\rm
	\label{exa-two}
	Let $\game$ be a game structure with six vertices $V = \{v_1,\ldots,v_6\}$ such that $d(v_i) = 2$ for all $v_i$'s, $\alpha(v_1) = \alpha(v_2) = 6$, $\alpha(v_3) = \alpha(v_4) = 3$, $\alpha(v_5) = \alpha(v_6) = 2$, and $p=1$. We aim at constructing a Defender's strategy for two patrollers. The algorithm of Section~\ref{sec-alg}  splits $V$ into the subsets $V_1 = \{v_1,v_2\}$,  $V_2 = \{v_3,v_4\}$, $V_3 = \{v_5,v_6\}$. For every $i \in \{1,2,3\}$, a trivial modular strategy $\eta_i$ for $\game_i$ and one patroller is constructed, where $\eta_i$ visits one vertex of $V_i$ at all even steps, and the other vertex of $V_i$ at all odd steps. 
	Now, the strategies $\eta_1,\eta_2,\eta_2$ are combined using an assignment $\beta$ which assigns at most one patroller to each $\game_i$, i.e., $\beta(1,1,0) = x$, $\beta(1,0,1) = y$, and $\beta(0,1,1) = z$, where $x+y+z =1$. Observe that the protection for the vertices of $V_1$, $V_2$, and $V_3$ achieved by $\eta$ is equal to
	\[
	6-(1-x-y)\cdot 6,\quad 6-(1-x-z) \cdot 3,\quad 6-(1-y-z) \cdot 2,
	\]
	respectively. Ideally, the above values should be equal. Thus, we obtain $x = 1/2$, $y = 1/3$, and $z = 1/6$. The level of protection achieved by the resulting strategy $\eta$ for $\game$ is equal to $5$, which matches the upper bound of Theorem~\ref{thm-upper-new}, i.e., $\eta$ is \emph{optimal}.\qed
\end{example}

\section{A Strategy Synthesis Algorithm}
\label{sec-alg}
In this section we design and evaluate a concrete strategy synthesis algorithm based on the compositional method presented in Section~\ref{sec-modular}. This algorithm is particularly apt for game structures where large subsets of vertices share the same attack length and the same importance weight. This applies to, e.g., surveillance systems and remote software protection systems\footnote{The number of IP camers or software modules can easily reach hundreds of millions, while the time needed to complete an intrusion or a software update typically ranges over a small interval of discretized time values. Similarly, the importance of a target usually ranges over a small set of discrete levels.} discussed in Section~\ref{sec-intro}. 
\smallskip

\textbf{Decomposition procedure.} For a given $\game = (V,d,\alpha,p)$, the decomposition starts by splitting the vertices of $V$ into pairwise disjoint subsets $U_{D,\tau}$ consisting of all $v \in V$ such that $d(v) = D$ and $\alpha(v) = \tau$ (for all $D$'s and $\tau$'s in the range of $d$ and $\alpha$, respectively). 
Then, each $U_{D,\tau}$ is further split into $\lceil |U_{D,\tau}|/D \rceil$ pairwise disjoint subsets of size precisely $D$ (these are called \emph{full}), and possibly one extra set with $(U_{D,\tau} ~\textrm{mod}~ D)$ elements (if $D$ does not divide $|U_{D,\tau}|$). These sets, constructed for all eligible $D$'s and $\tau$'s, are called the \emph{basic sets of~$\game$}, and they are not decomposed any further. Note that the decomposition of $\game$ into basic sets is independent of the number of patrollers assigned to protect $\game$.  

For every basic set $U = \{v_0,\ldots,v_{q-1}\}$ and every $k$, we need to compute a modular strategy $\mu$ for $U$ and $k$ patrollers. Recall that all vertices of $U$ have the same importance $\tau$ and the same attack length $D$, where $q \leq D$.  Let us first consider the simpler case when $q$ divides $D$. Imagine there are $k$ tokens, initially put on the vertices $v_0,\ldots,v_{k-1}$, which are then moved simultaneously around a circle formed by the vertices of $U$, where the successor of $v_i$ is  $v_{(i+1)~\textrm{mod}~ D}$. That is, after $\ell$ steps, the first token resides at $v_{\ell ~\textrm{mod}~ D}$, and the other tokens reside at the next $k-1$ vertices of the circle. The strategy $\mu(\ell)$ deterministically selects the $k$-tuple of vertices occupied by the tokens after $\ell$ steps.
Note that each token visits each vertex precisely $D/q$ times in $D$~consecutive steps.  

Now consider the general case when $q$ does not necessarily divide $D$.  In the first $\lfloor D/q \rfloor \cdot q$ steps, $\mu$ simulates $k$~tokens moving simultaneously around a circle similarly as above. In the remaining ($D ~\textrm{mod}~ q$) steps, a $k$-subset of $U$ is chosen uniformly at random (independently in each step). 
Formally, for every $\ell \in \{0,\ldots,D{-}1\}$, we have the following:
\begin{itemize}
	\item if $\ell < \lfloor D/q \rfloor \cdot q$, then $\mu(\ell)$ returns $\{v_{j_0},\ldots,v_{j_{k-1}}\}$ with probability one, where $j_n = (\ell+n) ~\textrm{mod}~ q$ for all $n \in \{0,\ldots, k{-}1\}$;
	\item otherwise, $\mu(\ell)$ returns a $k$-subset of $U$ chosen uniformly at random.
\end{itemize}
\smallskip

\noindent
\textbf{Assignment procedure.} This is the most advanced part of our algorithm. We need to construct an assignment $\beta$ for $k$ patrollers and the basic sets. We aim to construct $\beta$ so that the number of patrollers assigned to a given basic set $U$ is either $K_U$ or $K_U + 1$, where $K_U \in \Nset_0$ is a suitable constant depending only on $k$, the attack length, importance level, and the number of vertices of~$U$. The reason is that a wider variability in the number of patrollers assigned to $U$ would actually \emph{decrease}\footnote{This claim follows from the structure of modular strategies constructed for basic sets, and it can be proven rigorously.} the protection achieved for basic sets. 

Now we explain how to compute the constant $K_U$ and the probability $\lambda_U$ of sending precisely $K_U + 1$ patrollers to $U$.

Let us fix a basic set $U = \{v_1,\ldots,v_q\}$ where $\alpha(v_i) = \tau$ and $d(v_i) = D$ for all $v_i \in U$. If $\beta$ sends $K_U$ patrollers to $U$ with probability $(1-\lambda_U)$ and $K_U + 1$ patrollers with probability $\lambda_U$, then the expected number of patrollers sent to $U$, denoted by $E_U$, is equal to $K_U + \lambda_U$. Furthermore, the protection achieved for the vertices of $U$ is given by: %
{\scriptsize
\begin{equation}
\alpha_{\max} - \tau\cdot (1-p)^{K_U \cdot \lfloor D/q \rfloor} \cdot (1-p\cdot \lambda_U)^{\lfloor D/q \rfloor} 
      \cdot (1- p \cdot(K_U+\lambda_U)/q)^{(D ~\textrm{mod}~ q)}
\label{eq-protect-K-lambda}		
\end{equation}}%
Expression~\eqref{eq-protect-K-lambda} is obtained by a straightforward calculation omitted in here (possible subexpressions of the form $0^0$ are interpreted as $1$). 
Since we wish to protect all basic sets equally well, the above expression should produce the \emph{same} value for all $U \in W$. Hence, we can consider a system of equations stipulating that the above expressions are equal for all~$U$, and the sum of all positive $E_U$'s is equal to~$k$. The values for $K_U$ and $\lambda_U$ are obtained by solving this system.  Observe that \eqref{eq-protect-K-lambda} is parameterized by \emph{two} unknowns $K_U$ and $\lambda_U$, so we have more unknowns than equations. This is overcome by observing that $K_U = \lfloor E_U \rfloor$ and $\lambda_U = E_U - \lfloor E_U \rfloor$, which allows to use just \emph{one} variable $E_U$ instead of $K_U$ and $\lambda_U$. Furthermore, it may happen that a solution contains some negative $E_U$'s, which indicates that the importance of the vertices in $U$ is so low that $U$ is protected better than the other basic sets even if zero patrollers are assigned to~$U$. In this case, we simply remove $U$ from $\game$ and restart the algorithm.

\subsection{Evaluating the constructed strategies}
\label{sec-experiments}

For every Defender's strategy $\eta$ for $k$~patrollers, the \emph{relative deviation of $\eta$ from optimal strategy}, denoted by $\dev(\eta)$, is defined by \mbox{$(\val_k - \val(\eta))/\val(\eta)$}. Hence, $\eta$ is optimal iff \mbox{$\dev(\eta)=0$}. 

First, let us note that when all basic sets of $\game$ are full, then the achieved level of protection matches the upper bound of Theorem~\ref{thm-upper-new}, i.e., the strategy constructed by our algorithm is \emph{provably optimal}. For every game structure $\game$, there are two ``surrounding'' game structures obtained by decreasing/increasing the number of vertices in each $U_{D,\tau}$ to the nearest multiple of~$D$, for which our algorithm produces optimal strategies. With an increasing number of vertices, the levels of protection achieved for the surrouding game structures are closer and closer, which implies that $\dev(\eta)$ approaches zero. For a given $\delta >0$, one can easily compute a threshold such that, for every game structure where the number of vertices exceeds the threshold, the strategy $\eta$ constructed by our algorithm satisfies $\dev(\eta) < \delta$. Hence, our algorithm \emph{provably} produces strategies which are either optimal or very close to optimal, and this claim does \emph{not} require further experimental evidence. The aim of our experiments is to demonstrate that the time needed to solve the constructed system of equations is low (using Maple), and also to show that the protection achieved by our strategies is substantially better than the protection achieved by naively constructed strategies.

More concretely, we consider a surveillance system with three types of targets, where processing one image takes $0.1$~sec, an on-going intrusion is detected with probability $0.7$, the time needed to complete an intrusion is either $20$~secs, $2$~mins, or $15$~mins, and the target values are \$100000, \$130000, and \$400000, respectively. We analyze large instances where the number of targets of each type is $7000000 \cdot x$, $500000 \cdot x$, and $300000 \cdot x$, respectively, where the $x$ ranges from $1$ to $3$ with a step $0.01$. Hence, the total number of analyzed game structures is $200$ and the largest one has more than $23$ million vertices. The number of patrollers is set to $6000$. 
Since the time needed to compute $\eta$ was always negligible (the constructed systems of equations were solved by Maple in less than a second on an average PC), we do not report the running time details. 
The achieved levels of protection are plotted in Figure~\ref{fig-one}.
The levels of protection achieved by $\eta$ strategies differ from the theoretical upper bound by less than one dollar (for all instances). When we compare these strategies against naively constructed strategies $\sigma$, where in each round, the patrollers are identically distributed among the vertices so that all vertices are protected equally well, the difference between the levels of protection achieved by $\sigma$ strategies and the theoretical upper bound ranges between $\$157$ and $\$740$.

In Figure~\ref{fig-two}, we show the number of patrollers needed to achieve a given level of protection bounded by $270000$. Here we assume a fixed game with the same parameters as above where $x=1$, i.e., we have $7000000$, $500000$, and $300000$ vertices of the three types. The number of patrollers required by $\eta$ differs from the theoretical bound by at most one, while the naive strategies $\sigma$ need about $125\%$ 
of this amount on average. %
In the plot of Figure~\ref{fig-two}, the theoretical lower bound on the number of patrollers  and the number of patrollers required by $\eta$ are indistinguishable.

\begin{figure}
	\begin{tikzpicture}[font=\scriptsize]
	
	\begin{axis}[
	xmax=3.1,
	no markers,
	axis y line=left,axis x line=bottom,
	ytick={310000,308000,306000,304000},
	width=7.5cm,
	height=6cm,
	xlabel={$x$ such that the numbers of vertices are $7000000 \cdot x$, $500000 \cdot x$, and $300000 \cdot x$}, ylabel={level of protection by 6000 patrollers},
	scaled ticks=false, tick label style={/pgf/number format/fixed},
	]

	\addplot[line width=0.25pt,blue, dashed] file {pr-tb.dat};
	\addplot[line width=0.7pt,black,smooth, densely dotted] file {pr-v2.dat};
	\addplot[line width=0.25pt,red,smooth] file {pr-un.dat};

	\addlegendentry{bound of Theorem~\ref{thm-upper-new}}
	\addlegendentry{strategies $\eta$}
	\addlegendentry{strategies $\sigma$}
	
	\end{axis}
	
	\end{tikzpicture}
	\caption{The level of protection achieved by strategies.}
	\label{fig-one}	
\end{figure}
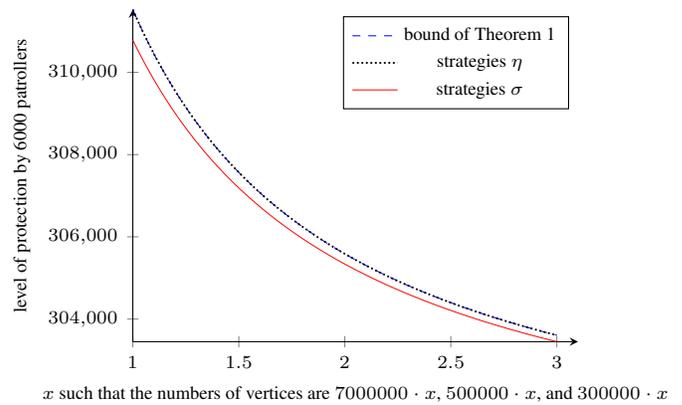

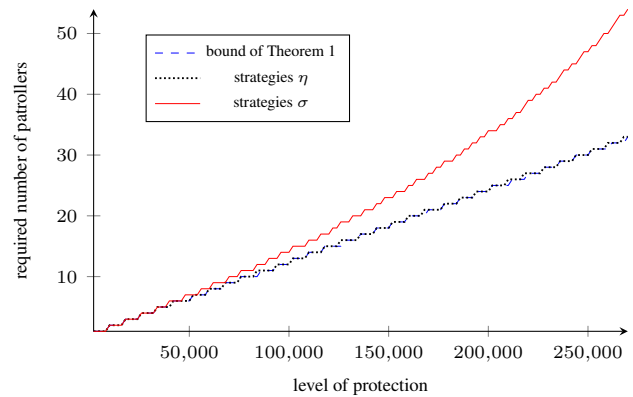
\begin{figure}	
	
\begin{tikzpicture}[font=\scriptsize]%

\begin{axis}[yscale=0.97, xscale=1.2,
scaled ticks=false, tick label style={/pgf/number format/fixed},
axis y line=left, axis x line=bottom,
ytick={0,10,20,30,40,50},
width=7.5cm,
height=6cm,
xlabel={level of protection}, ylabel={required number of patrollers},
legend style={font=\tiny,at={(0.4,.95)}},
]

\addplot[line width=0.25pt,blue,dashed] file {def-tb.dat};
\addplot[line width=0.7pt,black,densely dotted] file {def-v2.dat};
\addplot[line width=0.25pt,red] file {def-un.dat};

\addlegendentry{bound of Theorem~\ref{thm-upper-new}}
\addlegendentry{strategies $\eta$}
\addlegendentry{strategies $\sigma$}

\end{axis}
\end{tikzpicture}
	\caption{Achieving a given level of protection.}
\label{fig-two}
\end{figure}
	
\section{Conclusions}
\label{sec-concl}

The presented method can also be extended to more general models. For example, adapting our algorithm to a generalized model where $p$ is a \emph{function} from $V$ to $(0,1]$ is easy---the compositional principle stays the same, and in the algorithm, the equations used to compute an optimal assignment for the patrollers are slightly adjusted. Another direction for future research is to consider Attackers with limited capabilities.

The algorithm of Section~\ref{sec-alg} works well for game structures where the range of $d$ and $\alpha$ is relatively small compared to the number of vertices. For other classes of game structures, one may possibly develop other algorithms, but some decomposition principle similar to ours seems \emph{unavoidable} when the number of vertices reaches a certain threshold. Hence, we believe that the presented approach may trigger the development of the whole spectrum of decomposition techniques applicable to Internet security problems.

%\clearpage


\begin{thebibliography}{}
	
	\bibitem[\protect\citeauthoryear{Abaffy \bgroup \em et al.\egroup
	}{2014}]{ABRBKK:patrolling-eps-optimal}
	M.~Abaffy, T.~Br{\'{a}}zdil, V.~{\v{R}}eh{\'{a}}k, B.~Bo{\v{s}}ansk{\'{y}},
	A.~Ku{\v{c}}era, and J.~Kr{\v{c}}{\'{a}}l.
	\newblock Solving adversarial patrolling games with bounded error.
	\newblock In {\em Proceedings of AAMAS 2014}, pages 1617--1618, 2014.
	
	\bibitem[\protect\citeauthoryear{Agmon \bgroup \em et al.\egroup
	}{2008a}]{AKK:multi-robot-perimeter-adversarial}
	N.~Agmon, S.~Kraus, and G.~Kaminka.
	\newblock Multi-robot perimeter patrol in adversarial settings.
	\newblock In {\em Proceedings of the IEEE International Conference on Robotics
		and Automation (ICRA2008)}, pages 2339--2345. IEEE, 2008.
	
	\bibitem[\protect\citeauthoryear{Agmon \bgroup \em et al.\egroup
	}{2008b}]{AgmonKK08-2}
	N.~Agmon, V.~Sadov, G.~Kaminka, and S.~Kraus.
	\newblock The impact of adversarial knowledge on adversarial planning in
	perimeter patrol.
	\newblock In {\em Proceedings of AAMAS 2008}, pages 55--62, 2008.
	
	\bibitem[\protect\citeauthoryear{Almeida \bgroup \em et al.\egroup
	}{2004}]{ARSTMCC:Multi-agent-patrol}
	A.~Almeida, G.~Ramalho, H.~Santana, P.~Tedesco, T.~Menezes, V.~Corruble, and
	Y.~Chevaleyre.
	\newblock Recent advances on multi-agent patrolling.
	\newblock In {\em Advances in Artificial Intelligence -- SBIA 2004}, volume
	3171 of {\em LNCS}, pages 474--483. Springer, 2004.
	
	\bibitem[\protect\citeauthoryear{Basilico \bgroup \em et al.\egroup
	}{2009a}]{Basilico2009}
	N.~Basilico, N.~Gatti, and F.~Amigoni.
	\newblock {Leader-follower strategies for robotic patrolling in environments
		with arbitrary topologies}.
	\newblock In {\em AAMAS}, pages 57--64, 2009.
	
	\bibitem[\protect\citeauthoryear{Basilico \bgroup \em et al.\egroup
	}{2009b}]{Basilico2009-2}
	N.~Basilico, N.~Gatti, T.~Rossi, S.~Ceppi, and F.~Amigoni.
	\newblock {Extending algorithms for mobile robot patrolling in the presence of
		adversaries to more realistic settings}.
	\newblock In {\em WI-IAT}, pages 557--564, 2009.
	
	\bibitem[\protect\citeauthoryear{Basilico \bgroup \em et al.\egroup
	}{2010}]{Basilico2010}
	N.~Basilico, N.~Gatti, and F.~Villa.
	\newblock {Asynchronous Multi-Robot Patrolling against Intrusion in Arbitrary
		Topologies}.
	\newblock In {\em AAAI}, 2010.
	
	\bibitem[\protect\citeauthoryear{Basilico \bgroup \em et al.\egroup
	}{2012}]{BGA:large-patrol-AI}
	N.~Basilico, N.~Gatti, and F.~Amigoni.
	\newblock Patrolling security games: Definitions and algorithms for solving
	large instances with single patroller and single intruder.
	\newblock {\em AI}, 184--185:78--123, 2012.
	
	\bibitem[\protect\citeauthoryear{Basilico \bgroup \em et al.\egroup
	}{2016a}]{BLM:Remote-SW-Protection}
	N.~Basilico, A.~Lanzi, and M.~Monga.
	\newblock A security game model for remote software protection.
	\newblock In {\em Proceedings of {ARES} 2016}, pages 437--443, 2016.
	
	\bibitem[\protect\citeauthoryear{Basilico \bgroup \em et al.\egroup
	}{2016b}]{BNG:patrolling-alarm}
	N.~Basilico, G.~De Nittis, and N.~Gatti.
	\newblock A security game combining patrolling and alarm-triggered responses
	under spatial and detection uncertainties.
	\newblock In {\em Proceedings of AAAI 2016}, pages 404--410, 2016.
	
	\bibitem[\protect\citeauthoryear{Bosansky \bgroup \em et al.\egroup
	}{2011}]{bosansky2011aamas}
	B.~Bosansky, V.~Lisy, M.~Jakob, and M.~Pechoucek.
	\newblock {Computing Time-Dependent Policies for Patrolling Games with Mobile
		Targets}.
	\newblock In {\em AAMAS}, 2011.
	
	\bibitem[\protect\citeauthoryear{Bosansky \bgroup \em et al.\egroup
	}{2012}]{bosansky2012-aaaiss}
	B.~Bosansky, O.~Vanek, and M.~Pechoucek.
	\newblock {Strategy Representation Analysis for Patrolling Games}.
	\newblock In {\em AAAI Spring Symposium}, 2012.
	
	\bibitem[\protect\citeauthoryear{Ceccato and
		Tonella}{2011}]{CT:Codebender-IEEESoftware}
	M.~Ceccato and P.~Tonella.
	\newblock Codebender: Remote software protection using orthogonal replacement.
	\newblock {\em IEEE Software}, 28(2):28--34, 2011.
	
	\bibitem[\protect\citeauthoryear{Collberg \bgroup \em et al.\egroup
	}{2012}]{CMMN:distributed-tamper-detection}
	C.~Collberg, S.~Martin, J.~Myers, and J.~Nagra.
	\newblock Distributed application tamper detection via continuous software
	updates.
	\newblock In {\em Proceedings of the 28th Annual Computer Security Applications
		Conference}, pages 319--328. ACM Press, 2012.
	
	\bibitem[\protect\citeauthoryear{Cotton}{2015}]{Cotton:Video-Analytics}
	B.~Cotton.
	\newblock Enhancing a city's ability to plan, protect and manage with
	intelligent video analytics.
	\newblock Technical report, A Frost \& Sullivan White Paper, 2015.
	
	\bibitem[\protect\citeauthoryear{de Cote \bgroup \em et al.\egroup
	}{2013}]{MunozdeCote2013}
	E.~Munoz de~Cote, R.~Stranders, N.~Basilico, N.~Gatti, and N.~Jennings.
	\newblock Introducing alarms in adversarial patrolling games: extended
	abstract.
	\newblock In {\em AAMAS}, pages 1275--1276, 2013.
	
	\bibitem[\protect\citeauthoryear{Fang \bgroup \em et al.\egroup
	}{2013}]{Fang2013}
	F.~Fang, A.X.{} Jiang, and M.~Tambe.
	\newblock {Optimal Patrol Strategy for Protecting Moving Targets with Multiple
		Mobile Resources}.
	\newblock In {\em AAMAS}, 2013.
	
	\bibitem[\protect\citeauthoryear{Jain \bgroup \em et al.\egroup
	}{2010}]{JKKOT:massive-security-games}
	M.~Jain, E.~Karde, C.~Kiekintveld, F.~Ord{\'o}{\~{n}}ez, and M.~Tambe.
	\newblock Optimal defender allocation for massive security games: A branch and
	price approach.
	\newblock In {\em Workshop on Optimization in Multi-Agent Systems at AAMAS},
	2010.
	
	\bibitem[\protect\citeauthoryear{Kempe \bgroup \em et al.\egroup
	}{2018}]{KST:Quasi-regular-sequences-SODA}
	D.~Kempe, L.J.{} Schulman, and O.~Tamuz.
	\newblock Quasi-regular sequences and optimal schedulers for security games.
	\newblock In {\em Proceedings of SODA 2018}. SIAM, 2018.
	
	\bibitem[\protect\citeauthoryear{Kiekintveld \bgroup \em et al.\egroup
	}{2009}]{JKKOT:optimal-resource-massive-security-games}
	C.~Kiekintveld, M.~Jain, J.~Tsai, J.~Pita, F.~Ord{\'o}{\~{n}}ez, and M.~Tambe.
	\newblock Computing optimal randomized resource allocations for massive
	security games.
	\newblock In {\em Proceedings of AAMAS 2009}, pages 689--696, 2009.
	
	\bibitem[\protect\citeauthoryear{Ku{\v{c}}era and
		Lamser}{2016}]{KL:patrol-regular}
	A.~Ku{\v{c}}era and T.~Lamser.
	\newblock Regular strategies and strategy improvement: Efficient tools for
	solving large patrolling problems.
	\newblock In {\em Proceedings of AAMAS 2016}, pages 1171--1179, 2016.
	
	\bibitem[\protect\citeauthoryear{Pita \bgroup \em et al.\egroup
	}{2008}]{PJMOPTWPK:Deployed-ARMOR}
	J.~Pita, M.~Jain, J.~Marecki, F.~Ord{\'{o}}nez, C.~Portway, M.~Tambe,
	C.~Western, P.~Paruchuri, and S.~Kraus.
	\newblock Deployed {ARMOR} protection: The application of a game theoretic
	model for security at the {Los} {Angeles} {Int.} {Airport}.
	\newblock In {\em Proceedings of AAMAS 2008}, pages 125--132, 2008.
	
	\bibitem[\protect\citeauthoryear{Redmon \bgroup \em et al.\egroup
	}{2016}]{RDGF:Object-Detection}
	J.~Redmon, S.K.{} Divvala, R.B.{} Girshick, and A.~Farhadi.
	\newblock You only look once: Unified, real-time object detection.
	\newblock In {\em 2016 {IEEE} Conference on Computer Vision and Pattern
		Recognition, {CVPR} 2016}, pages 779--788. IEEE, 2016.
	
	\bibitem[\protect\citeauthoryear{Tsai \bgroup \em et al.\egroup
	}{2009}]{TRKOT:IRIS}
	J.~Tsai, S.~Rathi, C.~Kiekintveld, F.~Ord{\'o}{\~{n}}ez, and M.~Tambe.
	\newblock {IRIS}---a tool for strategic security allocation in transportation
	networks categories and subject descriptors.
	\newblock In {\em Proceedings of AAMAS 2009}, pages 37--44, 2009.
	
	\bibitem[\protect\citeauthoryear{Yin \bgroup \em et al.\egroup
	}{2010}]{yin2010stackelberg}
	Z.~Yin, D.~Korzhyk, C.~Kiekintveld, V.~Conitzer, and M.~Tambe.
	\newblock {Stackelberg vs. Nash in security games: Interchangeability,
		equivalence, and uniqueness}.
	\newblock In {\em AAMAS}, pages 1139--1146, 2010.
	
\end{thebibliography}
\end{document}